%
% File eacl2021.tex
%
%% Based on the style files for ACL 2020, which were
%% Based on the style files for ACL 2018, NAACL 2018/19, which were
%% Based on the style files for ACL-2015, with some improvements
%%  taken from the NAACL-2016 style
%% Based on the style files for ACL-2014, which were, in turn,
%% based on ACL-2013, ACL-2012, ACL-2011, ACL-2010, ACL-IJCNLP-2009,
%% EACL-2009, IJCNLP-2008...
%% Based on the style files for EACL 2006 by 
%%e.agirre@ehu.es or Sergi.Balari@uab.es
%% and that of ACL 08 by Joakim Nivre and Noah Smith

\documentclass[11pt,a4paper]{article}
\usepackage[hyperref]{eacl2021}
\usepackage{times}
\usepackage{latexsym}

\usepackage{amsmath,amsfonts,amssymb}
\usepackage{multirow}
\usepackage{algorithm}
\usepackage{algorithmic}

\usepackage{graphicx}
\usepackage{booktabs}
\usepackage{enumerate}

\usepackage{subfigure}
\usepackage{amsthm}
\usepackage{url}            % simple URL typesetting
\usepackage{amsfonts}       % blackboard math symbols
\usepackage{amsmath}
\newcommand{\xvec}{{\bf x}}

\newtheorem{lemma}{Lemma}
\newtheorem{theorem}{Theorem}
\newtheorem{corollary}{Corollary}

% This is not strictly necessary, and may be commented out,
% but it will improve the layout of the manuscript,
% and will typically save some space.
\usepackage{microtype}

\aclfinalcopy % Uncomment this line for the final submission
 %  Enter the acl Paper ID here

%\setlength\titlebox{5cm}
% You can expand the titlebox if you need extra space
% to show all the authors. Please do not make the titlebox
% smaller than 5cm (the original size); we will check this
% in the camera-ready version and ask you to change it back.

\title{Conditional Adversarial Networks for Multi-Domain Text Classification}

\author{Yuan Wu \textsuperscript{1}, Diana Inkpen \textsuperscript{2} and Ahmed El-Roby \textsuperscript{1} \\
  \textsuperscript{1}Carleton University, Ottawa, Ontario, Canada \\
  \textsuperscript{2}University of Ottawa, Ottawa, Ontario, Canada \\
  \texttt{\{yuan.wu3, Ahmed.ElRoby\}@carleton.ca}\quad \texttt{Diana.Inkpen@uottawa.ca} \\}

\date{}

\begin{document}
\maketitle
\begin{abstract}

In this paper, we propose conditional adversarial networks (CANs), a framework that explores the relationship between the shared features and the label predictions to impose more discriminability to the shared features, for multi-domain text classification (MDTC). The proposed CAN introduces a conditional domain discriminator to model the domain variance in both shared feature representations and class-aware information simultaneously and adopts entropy conditioning to guarantee the transferability of the shared features. We provide theoretical analysis for the CAN framework, showing that CAN's objective is equivalent to minimizing the total divergence among multiple joint distributions of shared features and label predictions. Therefore, CAN is a theoretically sound adversarial network that discriminates over multiple distributions. Evaluation results on two MDTC benchmarks show that CAN outperforms prior methods. Further experiments demonstrate that CAN has a good ability to generalize learned knowledge to unseen domains.

\end{abstract}

%%%%%%%%%%%%%%%%%%%%%%%%%%%%%%%%%%%%%%%%%%%%%%%%%%%%%%%%%%%%%%%%%%%%%%%%%%%%%%%%

\section{Introduction}

Text classification is a fundamental task in Natural Language Processing (NLP) and has received constant attention due to its wide applications, ranging from spam detection to social media analytics \cite{pang2002thumbs,hu2004mining,choi2008learning,socher2012semantic,vo2015target}. Over the past couple of decades, supervised machine learning methods have shown dominant performance for text classification, such as naive bayes classifiers \cite{troussas2013sentiment}, support vector machines \cite{li2018new} and neural networks \cite{wu2020dualb}. In particular, with the advent of deep learning, neural network-based text classification models have gained impressive achievements. However, text classification is known to be highly domain-dependent, the same word could convey different sentiment polarities in different domains \cite{glorot2011domain}. For example, the word \textit{infantile} expresses neutral sentiment in baby product review (e.g., \textit{The infantile cart is easy to use}), while in book review, it indicates a negative polarity (e.g.,  \textit{This book is infantile and boring}). Thus a text classifier trained on one domain is likely to make spurious predictions on another domain whose distribution is different from the training data distribution. In addition, it is always difficult to collect sufficient labeled data for all interested domains. Therefore, it is of great significance to explore how to leverage available resources from related domains to improve the classification accuracy on the target domain.

The major line of approaches to tackle the above problem is multi-domain text classification (MDTC) \cite{li2008multi}, which can handle the scenario where labeled data exist for multiple domains, but in insufficient amounts to training an effective classifier for each domain. Deep learning models have yielded impressive performance in MDTC \cite{wu2020dual,wu2021mixup}. Most recent MDTC methods adopt the shared-private paradigm, which divides the latent space into two types: one is the shared feature space among domains with the aim of capturing domain-invariant knowledge, the other one is the private feature space for each domain which extracts domain-specific knowledge. To explicitly ensure the optimum separations among the shared latent space and multiple domain-specific feature spaces, the adversarial training \cite{goodfellow2014generative} is introduced in MDTC. By employing the adversarial training, the domain-specific features can be prevented from creeping into the shared latent space, which will lead to feature redundancy \cite{liu2017adversarial}. In adversarial training, a multinomial domain discriminator is trained against a shared feature extractor to minimize the divergences across different domains. When the domain discriminator and the shared feature extractor reach equilibrium, the learned shared features can be regarded as domain-invariant and used for the subsequent classification. The adversarial training-based MDTC approaches yield the state-of-the-art results \cite{liu2017adversarial,chen2018multinomial}. However, these methods still have a significant limitation: when the data distributions present complex structures, adversarial training may fail to perform global alignment among domains. Such a risk comes from the challenge that in adversarial training, only aligning the marginal distributions can not sufficiently guarantee the discriminability of the learned features. The features with different labels may be aligned, as shown in Figure \ref{Fig8}. The fatal mismatch can lead to weak discriminability of the learned features.

\begin{figure}
    \centering
    \includegraphics[width=0.9\columnwidth]{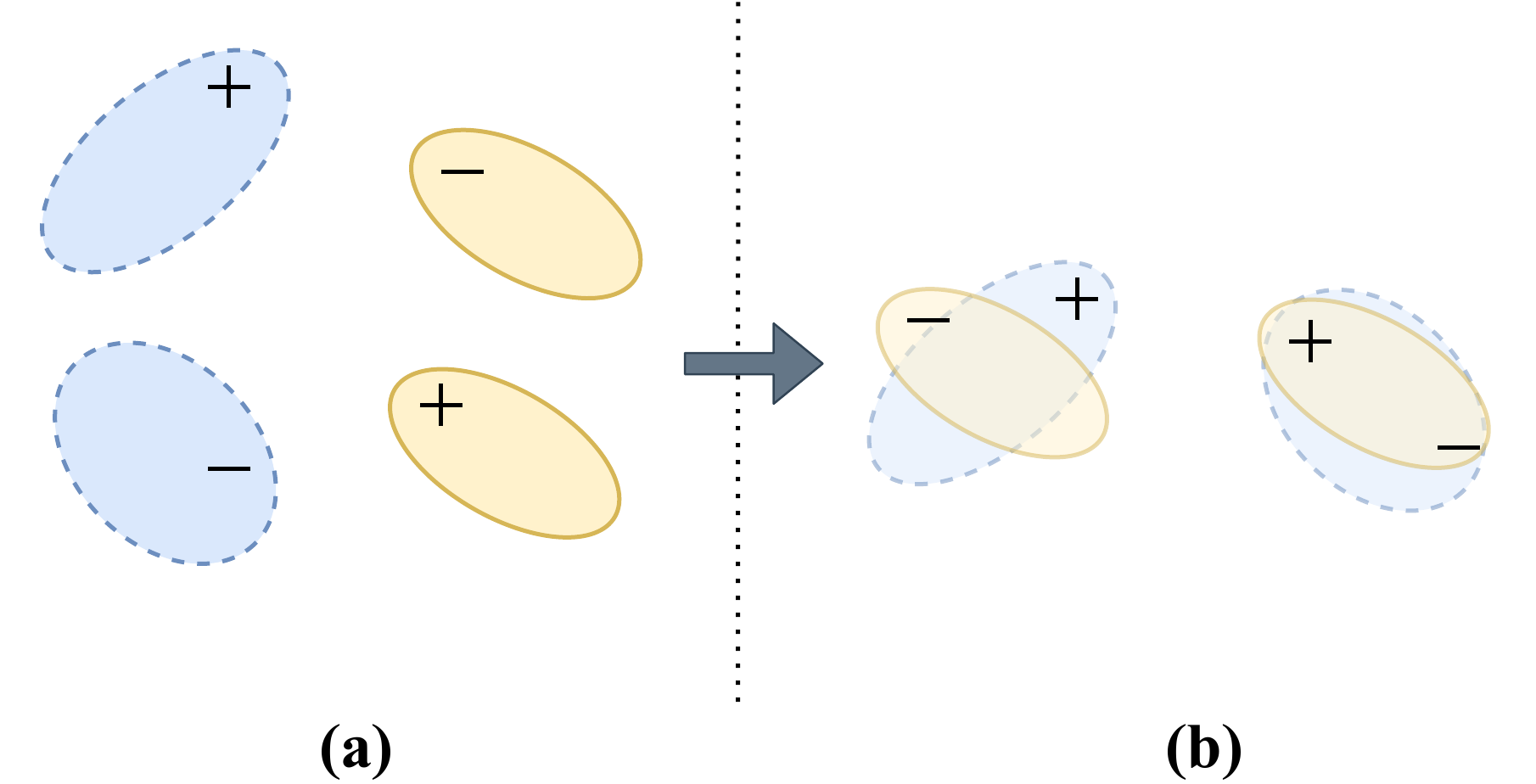}
    \caption{The mismatch risk when aligning the marginal distributions in MDTC, we present the case containing two domains $D_1$ and $D_2$. The blue regions denote distributions of $D_1$, and the yellow regions denote distributions of $D_2$. (a) The scenario before performing domain alignment. (2) When aligning the marginal distributions, a mismatch may occur with regard to the label.
	}
    \label{Fig8}
\end{figure}

In this paper, motivated by the conditional generative adversarial networks (CGANs), which aligns distributions of real and generated images via conditioning the generator and discriminator on extra information \cite{mirza2014conditional}, we propose conditional adversarial networks (CANs) to address the aforementioned challenge. The CAN method introduces a conditional domain discriminator that models domain variance in both shared features and label predictions, exploring the relationship between shared feature representations and class-aware information conveyed by label predictions to encourage the shared feature extractor to capture more discriminative information. Moreover, we use entropy conditioning to avoid the risk of conditioning on the class-aware information with low certainty. The entropy conditioning strategy can give higher priority to easy-to-transfer instances. We also provide a theoretical analysis demonstrating the validity of CANs. Our approach adopts the shared-private paradigm. We validate the effectiveness of CAN on two MDTC benchmarks. It can be noted that CAN outperforms the state-of-the-art methods for both datasets. Finally, we empirically illustrate that CAN has the ability to generalize in cases where no labeled data exist for a subset of domains. The contributions of our work are listed as follows:

\begin{itemize}
    \item We propose conditional adversarial networks (CANs) for multi-domain text classification which incorporate conditional domain discriminator and entropy conditioning to perform alignment on the joint distributions of shared features and label predictions to improve the system performance.
    \item We present the theoretical analysis of the CAN framework, demonstrating that CANs are minimizers of divergences among multiple joint distributions of shared features and label predictions, and providing the condition where the conditional domain discriminator reaches its optimum.
    \item We evaluate the effectiveness of CAN on two MDTC benchmarks. The experimental results show that CAN yields state-of-the-art results. Moreover, further experiments on unsupervised multi-source domain adaptation demonstrate that CAN has a good capacity to generalize to unseen domains.
\end{itemize}

%%%%%%%%%%%%%%%%%%%%%%%%%%%%%%%%%%%%%%%%%%%%%%%%%%%%%%%%%%%%%%%%%%%%%%%%%%%%%%%%

\section{Related Work}

Multi-domain text classification (MDTC) was first proposed by \cite{li2008multi}, aiming to simultaneously leverage all existing resources across different domains to improve the system performance. Currently, there are two main streams for MDTC: one strand exploits covariance matrix to model the relationship across domains \cite{dredze2008online,saha2011online,zhang2012convex}; the other strand is based on neural networks, sharing the first several layers for each domain to extract low-level features and generating outputs with domain-specific parameters. The multi-task convolutional neural network (MT-CNN) utilizes a convolutional layer in which only the lookup table is shared for better word embeddings \cite{collobert2008unified}. The collaborative multi-domain sentiment classification (CMSC) combines a classifier that learns common knowledge among domains with a set of classifiers, one per domain, each of which captures domain-dependent features to make the final predictions \cite{wu2015collaborative}. The multi-task deep neural network (MT-DNN) maps arbitrary text queries and documents into semantic vector representations in a low dimensional latent space and combines tasks as disparate as operations necessary for classification \cite{liu2015representation}.

Pioneered by the generative adversarial network (GAN) \cite{goodfellow2014generative}, adversarial learning has been firstly proposed for image generation. \cite{ganin2016domain} applies adversarial learning in domain adaptation to extract domain-invariant features across two different distributions (binary adversarial learning). \cite{zhao2017multiple} extends it to multiple adversarial learning, enabling the model to learn domain-invariant representations across multiple domains. However, only considering domain-invariant features can not provide optimal solutions for MDTC, because domain-specific information also plays an important role in training an effective classifier. \cite{bousmalis2016domain} proposes the shared-private paradigm to combine domain-invariant features with domain-specific ones to perform classification, illustrating that this scheme can improve system performance. To date, many state-of-the-art MDTC models adopt adversarial learning and shared-private paradigm. The adversarial multi-task learning for text classification (ASP-MTL) utilizes long short-term memory (LSTM) without attention as feature extractors and introduces orthogonality constraints to encourage the shared and private feature extractors to encode different aspects of the inputs \cite{liu2017adversarial}. The multinomial adversarial network (MAN) exploits two forms of loss functions to train the domain discriminator: the least square loss (MAN-L2) and negative log-likelihood loss (MAN-NLL) \cite{chen2018multinomial}. The multi-task learning with bidirectional language models for text classification (MT-BL) introduces language modeling as an auxiliary task to encourage the domain-specific feature extractors to capture more syntactic and semantic information, and a uniform label distribution-based loss constraint to the shared feature extractor to enhance the ability to learn domain-invariant features \cite{yang2019multi}.

Adversarial learning has several advantages, such as Markov chains are not needed and no inference is required during learning \cite{mirza2014conditional}. However, there still exists an issue in adversarial learning. When data distributions embody complex structures, adversarial learning can fail in performing the global alignment. The conditional generative adversarial network (CGAN) is proposed to address this problem \cite{mirza2014conditional}. In CGAN, both the generator and discriminator are conditioned on some extra information, such as labels or data from other modalities, to yield better results. Sharing some spirit of CGAN, this paper extends conditional adversarial learning in MDTC, enabling a domain discriminator on the shared features while conditioning it on the class-aware information conveyed by the label predictions. Moreover, in order to guarantee the generalizability of the learned features, we also utilize the entropy conditioning strategy.

%%%%%%%%%%%%%%%%%%%%%%%%%%%%%%%%%%%%%%%%%%%%%%%%%%%%%%%%%%%%%%%%%%%%%%%%%%%%%%%%

\section{Approach}

%%%%%%%%%%%%%%%%%%%%%%%%

\begin{figure*}
    \centering
    \includegraphics[width=1.3\columnwidth]{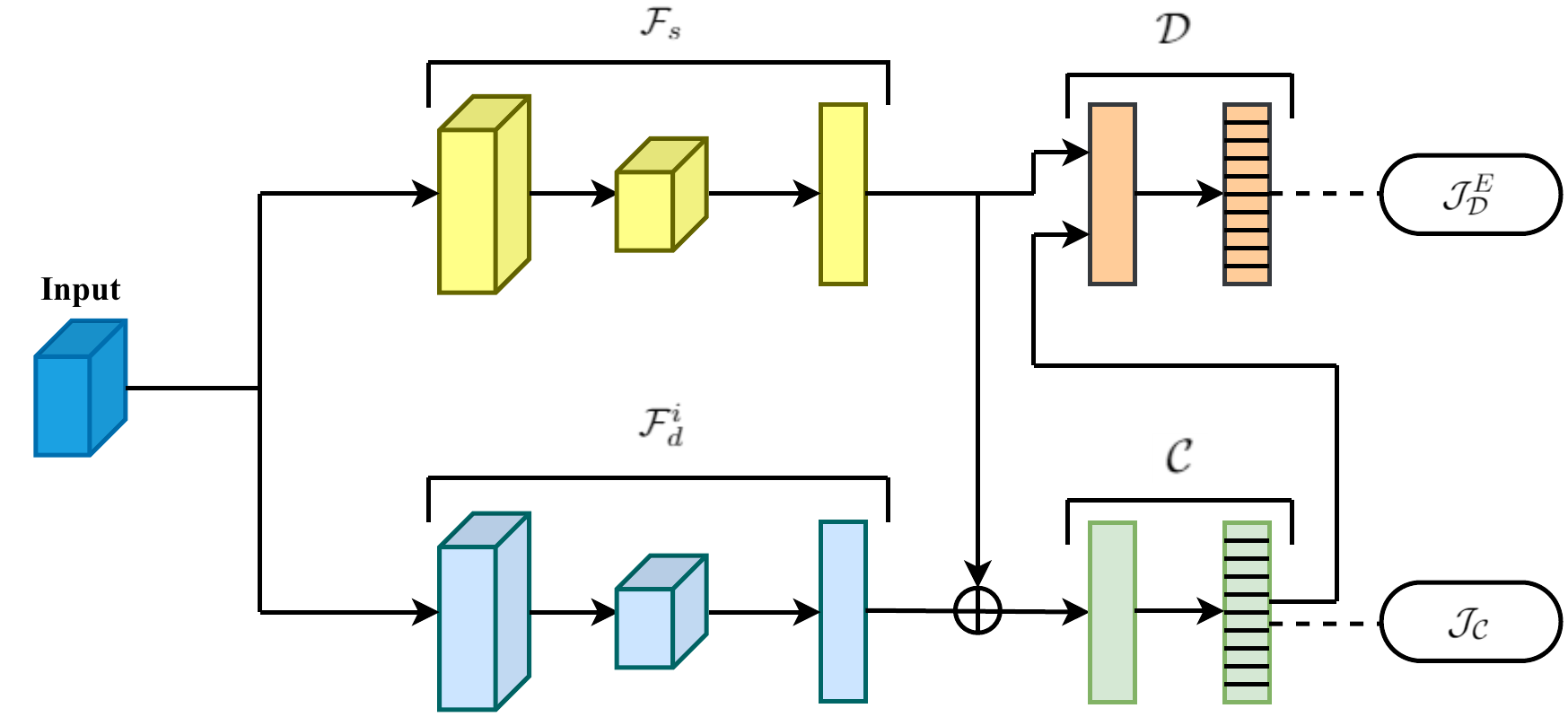}
    \caption{The architecture of the CAN model. A shared feature extractor $\mathcal{F}_s$ learns to capture domain-invariant features;  
	each domain-specific feature extractor $\mathcal{F}^i_{d}$
	learns to capture domain-dependent features;  
	a conditional domain discriminator $\mathcal{D}$ models shared feature distributions by conditioning on discriminative information provided by label predictions; a classifier $\mathcal{C}$ is used to conduct text classification; $\mathcal{J}_\mathcal{C}$ is the classification loss function; and $\mathcal{J}_{\mathcal{D}}^E$ is the entropy conditioning adversarial loss function which guides the domain-invariant feature extraction.
	}
    \label{Fig2}
\end{figure*}

%%%%%%%%%%%%%%%%%%%%%%%%%%%

In this paper, we consider MDTC tasks in the following setting. Assume there exist $M$ domains $\{D_i\}_{i=1}^{M}$. For each domain, both labeled and unlabeled samples are taken into consideration. Specifically, $D_i$ contains two parts: a limited amount of labeled samples $\mathbb{L}_i=\{(\xvec_j,y_j)\}_{j=1}^{l_i}$; and a large amount of unlabeled samples $\mathbb{U}_i=\{\xvec_j\}_{j=1}^{u_i}$. The challenge of MDTC lies in how to improve the system performance of mapping the input $\xvec$ to its corresponding label $y$ by leveraging all available resources across different domains. The performance is measured as the average classification accuracy across the $M$ domains. 

\subsection{Model Architecture}

We propose conditional adversarial networks (CANs), as shown in Figure \ref{Fig2}, which adopt the shared-private scheme and consist of four components: a shared feature extractor $\mathcal{F}_s$, a set of domain-specific feature extractors $\{\mathcal{F}_d^i\}_{i=1}^M$, a conditional domain discriminator $\mathcal{D}$, and a text classifier $\mathcal{C}$. The shared feature extractor $\mathcal{F}_s$ learns to capture domain-invariant features that are beneficial to classification across all domains, while each domain-specific feature extractor $\mathcal{F}_d^i$ aims to learn knowledge that is unique to its own domain. The architecture of these feature extractors are flexible and can be decided based on the practical task. For instance, it can adopt the form of a convolutional neural network (CNN), a recurrent neural network (RNN), or a multiple layer perceptron (MLP). Here, a feature extractor generates vectors with a fixed length, which is considered as the hidden representation of certain input. The classifier $\mathcal{C}$ takes the concatenation of a shared feature and a domain-specific feature as its input and outputs label probabilities. The conditional domain discriminator $\mathcal{D}$ takes the concatenation of a shared feature and the prediction of the given instance provided by $\mathcal{C}$ as its input and predicts the likelihood of that instance coming from each domain.

\subsection{Conditional Adversarial Training}

Adversarial learning has been successfully investigated in minimizing divergences among domains \cite{chen2018multinomial,zhao2017multiple}. In standard adversarial learning for MDTC, a two-player mini-max game is conducted between a domain discriminator and a shared feature extractor: the domain discriminator is trained to distinguish features across different domains, and the shared feature extractor aims to deceive the discriminator. By performing this mini-max optimization, the domain-invariant features can be learned. The error function of the domain discriminator corresponds well to the divergences among domains. Most MDTC methods align the marginal distributions. However, the transferability with representations transition from general to specific along deep networks is decreasing significantly \cite{yosinski2014transferable}, only adapting the marginal distributions is not sufficient to guarantee the global alignment. In addition, when the data distributions embody complex structures, which is a real scenario for NLP applications, there is a high risk of failure by matching features with different labels.

Recent advances in the conditional generative adversarial network (CGAN) disclose that better alignment on two different distributions can be obtained by conditioning the generator and discriminator on class-aware information \cite{mirza2014conditional}. The discriminative information provided by the label prediction potentially reveals the structure information underlying the data distribution. Thus, conditional adversarial learning can better model the divergences among domains on shared feature representations and label predictions. Unlike the prior works that adapting the marginal distributions \cite{liu2017adversarial,chen2018multinomial}, our proposed CAN framework is formalized on aligning joint distributions of shared features and label predictions. There exist two training flows in our model. Due to the nature of adversarial learning, the conditional domain discriminator is updated with a separate optimizer, while the other components of CAN are trained with the main optimizer. These two flows are supposed to complement each other. Denote $\mathcal{L}_{\mathcal{C}}$ and $\mathcal{L}_{\mathcal{D}}$ as the loss functions of the classifier $\mathcal{C}$ and the conditional domain discriminator $\mathcal{D}$, respectively. We utilize the negative log-likelihood (NLL) loss to encode these two loss functions:

\begin{align}
    \mathcal{L}_{\mathcal{C}}(\widetilde{y}, y) = -\log P(\widetilde{y} = y)
\end{align}

\begin{align}
    \mathcal{L}_{\mathcal{D}}(\widetilde{d}, d) = -\log P(\widetilde{d} = d)
\end{align}

\noindent where $y$ is the true label, $\widetilde{y}$ is the label prediction, $d$ is the domain index and $\widetilde{d}$ is the domain prediction. Therefore, we formulate CAN as a mini-max optimization problem with two competitive terms defined as follows: 

\begin{align}
    \mathcal{J}_{\mathcal{C}} = \sum_{i=1}^M \mathbb{E}_{(\xvec,y)\sim\mathbb{L}_i}[\mathcal{L}_{\mathcal{C}}(\mathcal{C}_i,y)]
\end{align}

\begin{equation}
    \begin{aligned}
    \mathcal{J}_{\mathcal{D}} = \sum_{i=1}^M \mathbb{E}_{\xvec \sim \mathbb{L}_i \cup \mathbb{U}_i}[\mathcal{L}_{\mathcal{D}}(\mathcal{D}([\mathcal{F}_s(\xvec), \mathcal{C}_i]),d)]
    \end{aligned}
\end{equation}

\noindent where $[\cdot,\cdot]$ is the concatenation of two vectors, $\mathcal{C}_i=\mathcal{C}([\mathcal{F}_s(\xvec),\mathcal{F}_d^i(\xvec)])$ is the prediction probability of the given instance $\xvec$. $\mathcal{C}$ and $\mathcal{D}$ adopt MLPs with a softmax layer on top. For the domain-specific feature extractors $\{\mathcal{F}_d^i\}_{i=1}^M$, the training is straightforward, as their objective is simple: help $\mathcal{C}$ perform better classification. While the shared feature extractor $\mathcal{F}_s$ has two goals: (1) help $\mathcal{C}$ reduce prediction errors, and (2) confuse $\mathcal{D}$ to reach equilibrium.

\subsection{Entropy Conditioning}

We condition the domain discriminator $\mathcal{D}$ on the joint variable $(f,c) = (\mathcal{F}_s(\xvec), \mathcal{C}_i)$. For brevity, here we use $f$ and $c$ to denote $\mathcal{F}_s(\xvec)$ and $\mathcal{C}_i$, respectively. If we enforce different instances to have equal importance, the hard-to-transfer instances with uncertain predictions may deteriorate the system performance \cite{saito2019semi}. In order to alleviate the harmful effects introduced by the hard-to-transfer instances, we introduce the entropy criterion $E(c)=-\sum_{k=1}^2[c_k log c_k]$ to quantify the uncertainty of label predictions, where $c_k$ is the probability of predicting an instance to category $k$ (negative: k = 1, positive: k = 2). By using the entropy conditioning, the easy-to-transfer instances with certain predictions are given higher priority. We reweigh these instances by an entropy-aware term: $w(c) = 1 + e^{-E(c)}$. Therefore, the improved $\mathcal{J}_{\mathcal{D}}$ is defined as:

\begin{equation}
    \begin{aligned}
    \mathcal{J}_{\mathcal{D}}^E &= \sum_{i=1}^M \mathbb{E}_{\xvec \sim \mathbb{L}_i \cup \mathbb{U}_i}[w(c) \mathcal{L}_{\mathcal{D}}(\mathcal{D}([\mathcal{F}_s(\xvec), \mathcal{C}_i]),d)]
    \end{aligned}
\end{equation}

\noindent Therefore, the mini-max game of CAN is formulated as:

\begin{align}
    \min_{\mathcal{F}_s,\{\mathcal{F}_d^i\}_{i=1}^M, C}\max_{\mathcal{D}} \quad \mathcal{J}_{\mathcal{C}} + \lambda \mathcal{J}_{\mathcal{D}}^E
\end{align}

\noindent where $\lambda$ is a hyperparameter balancing the two objectives. The entropy conditioning empowers the entropy minimization principle \cite{grandvalet2005semi} and controls the certainty of the predictions, enabling CAN have the ability to generalize on unseen domains with no labeled data. The CAN training is illustrated in Algorithm  \ref{trainingalg}.

\begin{algorithm}
\caption{Stochastic gradient descent training algorithm}\label{trainingalg}
\begin{algorithmic}[1]
\STATE{\bf Input:} labeled data $\mathbb{L}_i$ and unlabeled data $\mathbb{U}_i$ in $M$ domains; 
	a hyperparameter $\lambda$.
\FOR{number of training iterations}
	\STATE Sample labeled mini-batches from the multiple domains $B^\ell=\{B^\ell_1,\cdots, B^\ell_M\}$.
	\STATE Sample unlabeled mini-batches from the multiple domains $B^u=\{B^u_1,\cdots, B^u_M\}$.
	\STATE Calculate $loss = \mathcal{J}_{\mathcal{C}}+\lambda \mathcal{J}_{\mathcal{D}}^E$ on $B^\ell$ and $B^u$;\\
	Update $\mathcal{F}_s$, $\{\mathcal{F}_d^i\}_{i=1}^M$, $\mathcal{C}$ by descending along the gradients $\Delta loss$.\\[.2ex] 
	\STATE Calculate $l_D=\mathcal{J}_{\mathcal{D}}^E$ on $B^\ell$ and $B^u$;\\
	Update $\mathcal{D}$ by ascending along the gradients $\Delta l_D$.\\[.2ex]
\ENDFOR
\end{algorithmic}
\end{algorithm}

\subsection{Theoretical Analysis}

In this section, we present an analysis showing the validity of the CAN approach for MDTC. All proofs are given in the Appendix. The objective of CAN is equivalent to minimizing the total divergence among the $M$ joint distributions. First, we define different joint distributions as $P_i(f,c) \triangleq P(f = \mathcal{F}_s(\xvec), c = \mathcal{C}_i | \xvec \in D_i)$. Combining $\mathcal{L}_\mathcal{D}$ with $\mathcal{J}_\mathcal{D}$, the objective of $\mathcal{D}$ can be written as:

\begin{align}
    \mathcal{J}_\mathcal{D} = -\sum_{i=1}^M \mathbb{E}_{(f,c)\sim P_i}[\log\mathcal{D}_i([f,c])]
\end{align}

\noindent where $\mathcal{D}_i([f,c])$ yields the probability of the vector $([f,c])$ coming from the $i$-th domain. We first derive that CAN could achieve its optimum if and only if all $M$ joint distributions are identical.

\begin{lemma}
\label{lem1}
 For any given $\mathcal{F}_s$, $\{\mathcal{F}_d^i\}_{i=1}^M$ and $\mathcal{C}$, the optimum conditional domain discriminator $\mathcal{D}^*$ is:
\begin{align}
    \mathcal{D}_i^*([f,c]) = \frac{P_i(f,c)}{\sum_{j=1}^M P_j(f,c)}
\end{align}
\end{lemma}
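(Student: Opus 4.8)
The plan is to adapt the optimal-discriminator argument of the original GAN analysis to the multinomial setting used here, optimizing the functional $\mathcal{J}_{\mathcal{D}}$ over $\mathcal{D}$ pointwise on the probability simplex enforced by the softmax output layer. First I would turn the expectations into integrals against the joint densities $P_i$ and exchange the finite sum over domains with the integral to obtain
\begin{equation}
    \mathcal{J}_{\mathcal{D}} = -\int \sum_{i=1}^{M} P_i(f,c)\,\log \mathcal{D}_i([f,c])\, d(f,c),
\end{equation}
and then record the constraint coming from the softmax layer: at every point $(f,c)$ the outputs satisfy $\mathcal{D}_i([f,c]) \ge 0$ and $\sum_{i=1}^{M}\mathcal{D}_i([f,c]) = 1$.

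The key observation is that the value of $\mathcal{D}$ at a given $(f,c)$ enters the integrand only through the $M$ numbers $\{\mathcal{D}_i([f,c])\}_{i=1}^{M}$, and the normalization constraint couples those numbers only at that same point; hence optimizing $\mathcal{J}_{\mathcal{D}}$ over all admissible $\mathcal{D}$ decouples into independent per-point problems. Each such problem is to optimize $-\sum_{i=1}^{M} a_i \log x_i$ over the simplex $\{x_i \ge 0 : \sum_i x_i = 1\}$ with $a_i = P_i(f,c)$. I would solve this with a single Lagrange multiplier for the equality constraint: stationarity gives $x_i = a_i/\mu$, and substituting into $\sum_i x_i = 1$ forces $\mu = \sum_{j} a_j$, so $x_i^* = a_i / \sum_j a_j$. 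Rewriting with $a_i = P_i(f,c)$ yields exactly the claimed $\mathcal{D}_i^*([f,c]) = P_i(f,c)/\sum_{j=1}^{M} P_j(f,c)$. Since $x \mapsto -\sum_i a_i \log x_i$ is convex on the simplex, this stationary point is the unique optimizer rather than a saddle, so the pointwise solution is genuine.

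Two points will need care in the write-up. First, I must justify the reduction to pointwise optimization: because the simplex constraint is imposed separately at each $(f,c)$ and the objective is an integral of per-point terms, a $\mathcal{D}$ that is optimal pointwise is optimal globally, it being understood that on the set where all $P_i$ vanish the value of $\mathcal{D}^*$ is irrelevant and may be fixed to any normalized vector. Second, I would emphasize that the equal weighting of the $M$ domains in the sum $\sum_i \mathbb{E}_{P_i}[\cdot]$ is precisely what produces the unweighted denominator $\sum_j P_j$; the resulting $\mathcal{D}^*$ is thus the Bayes posterior over domains under a uniform domain prior, the direct multinomial analogue of the binary optimum $D^* = p_{\mathrm{data}}/(p_{\mathrm{data}}+p_g)$.

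I expect the only real obstacle to be this reduction-to-pointwise step together with correctly handling the simplex constraint; the Lagrange computation itself is routine. The remaining substance is simply confirming that the critical point found is the correct extremum, namely the minimizer of the NLL domain loss (equivalently the maximizer of the domain log-likelihood $\sum_i \mathbb{E}_{P_i}[\log \mathcal{D}_i]$), which follows immediately from the convexity noted above.
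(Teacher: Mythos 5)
Your proposal is correct and follows essentially the same route as the paper's proof: rewrite $\mathcal{J}_{\mathcal{D}}$ as an integral of $\sum_i P_i \log \mathcal{D}_i$ and maximize pointwise on the simplex via a Lagrange multiplier for the normalization constraint $\sum_i \mathcal{D}_i = 1$. Your write-up is in fact somewhat more careful than the paper's, since you explicitly justify the reduction to pointwise optimization, handle the degenerate set where all $P_i$ vanish, and verify via convexity that the stationary point is the true optimum rather than merely a critical point.
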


\noindent Then we provide the main theorem for the CAN framework:

\begin{theorem}
\label{theo1}
Let $\widetilde{P}(f,c) = \frac{\sum_{i=1}^M P_i(f,c)}{M}$, when $\mathcal{D}$ is trained to its optimum $\mathcal{D}^*$, we have:
\begin{align}
    \mathcal{J}_{\mathcal{D}^*}=M \log M-\sum_{i=1}^M KL(P_i(f,c)||\widetilde{P}(f,c))
\end{align}
where $KL(\cdot)$ is the Kullback-Leibler (KL) divergence \cite{aslam2007query} of each joint distribution $P_i(f,c)$ to the centroid $\widetilde{P}(f,c)$.
\end{theorem}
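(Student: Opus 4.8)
The plan is to substitute the optimal discriminator from Lemma~\ref{lem1} directly into the discriminator objective $\mathcal{J}_\mathcal{D} = -\sum_{i=1}^M \mathbb{E}_{(f,c)\sim P_i}[\log\mathcal{D}_i([f,c])]$ and then reorganize the resulting expression into a sum of Kullback--Leibler divergences. First I would use the closed form $\mathcal{D}_i^*([f,c]) = P_i(f,c)/\sum_{j=1}^M P_j(f,c)$ to write $\mathcal{J}_{\mathcal{D}^*} = -\sum_{i=1}^M \mathbb{E}_{(f,c)\sim P_i}\left[\log\frac{P_i(f,c)}{\sum_{j=1}^M P_j(f,c)}\right]$.

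The key algebraic step is to recognize that $\sum_{j=1}^M P_j(f,c) = M\,\widetilde{P}(f,c)$ by the definition of the centroid distribution $\widetilde{P}$. Substituting this identity and splitting the logarithm gives $\log\frac{P_i(f,c)}{M\widetilde{P}(f,c)} = \log\frac{P_i(f,c)}{\widetilde{P}(f,c)} - \log M$. I would then distribute the expectation and the outer sum across these two terms.

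For the constant term, since each $P_i$ is a probability distribution, $\mathbb{E}_{(f,c)\sim P_i}[\log M] = \log M$, so summing over the $M$ domains contributes $+M\log M$. For the ratio term, the definition of the KL divergence $KL(P_i\|\widetilde{P}) = \mathbb{E}_{(f,c)\sim P_i}[\log(P_i/\widetilde{P})]$ converts $-\sum_{i=1}^M \mathbb{E}_{(f,c)\sim P_i}[\log(P_i/\widetilde{P})]$ into $-\sum_{i=1}^M KL(P_i(f,c)\|\widetilde{P}(f,c))$. Combining the two pieces yields the claimed identity.

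The computation is essentially a substitution followed by the standard Jensen--Shannon-style rewriting familiar from the original GAN analysis; there is no genuine obstacle here, only the bookkeeping of the $\log M$ constant. The one point I would verify carefully is that $\widetilde{P}$ is a legitimate probability distribution (it averages $M$ densities, hence integrates to one), so that each $KL(P_i\|\widetilde{P})$ is well defined and nonnegative. This also makes transparent the conclusion that $\mathcal{J}_{\mathcal{D}^*}$ is maximized exactly when every $KL$ term vanishes, i.e.\ when all $P_i$ coincide with $\widetilde{P}$ and the $M$ joint distributions are therefore identical.
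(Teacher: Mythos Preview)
Your proposal is correct and follows essentially the same approach as the paper: substitute the optimal discriminator from Lemma~\ref{lem1} into $\mathcal{J}_\mathcal{D}$, use $\sum_{j=1}^M P_j(f,c)=M\widetilde{P}(f,c)$, split the logarithm to peel off the $M\log M$ constant, and identify the remaining term as $\sum_{i=1}^M KL(P_i\|\widetilde{P})$. The paper's presentation differs only cosmetically (it phrases the step as adding and subtracting $\log M$ inside the expectation rather than as splitting $\log(P_i/(M\widetilde{P}))$), but the computation is identical.
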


\noindent Finally, considering the non-negativity and convexity of the 
KL-divergence \cite{brillouin2013science}, we have:

\begin{corollary}
\label{coro1}
When $\mathcal{D}$ is trained to its optimum $\mathcal{D}^*$, $\mathcal{J}_{\mathcal{D}^*}$ is $M \log M$. The optimum can be obtained if and only if $P_1(f,c) = P_2(f,c) = ... = P_M(f,c) = \widetilde{P}(f,c)$.
\end{corollary}

\noindent Therefore, by using conditional adversarial training, we can train the conditional domain discriminator on the joint variable $(f,c)$ to minimize the total divergence across different domains, yielding promising performance in MDTC tasks.

%%%%%%%%%%%%%%%%%%%%%%%%%%%%%%%%%%%%%%%%%%%%%%%%%%%%%%%%%%%%%%%%%%%%%%%%%%%%%%%%

\section{Experiments}

We evaluate the effectiveness of the CAN model on both MDTC and unsupervised multi-source domain adaptation tasks. The former refers to the setting where the test data falls into one of the $M$ domains, and the latter refers to the setting where the test data comes from an unseen domain without labels. Moreover, an ablation study is provided for further analysis of the CAN model.

\subsection{Experimental Settings}
\paragraph{Dataset}

%%%%%%%%%%%%%%%%%%%%%%%%%%%%%%%%%%%%%%%%%%%%%%%%%%%%%%%%%

We conduct experiments on two MDTC benchmarks: the Amazon review dataset \cite{blitzer2007biographies} and the FDU-MTL dataset \cite{liu2017adversarial}. The Amazon review dataset consists of four domains: books, DVDs, electronics, and kitchen. For each domain, there exist 2,000 instances: 1,000 positive ones and 1,000 negative ones. All data was pre-processed into a bag of features (unigrams and bigrams), losing all word order information. In our experiments, the 5,000 most frequent features are used, representing each review as a 5,000-dimensional vector. The FDU-MTL dataset is a more complicated dataset, which contains 16 domains: books, electronics, DVDs, kitchen, apparel, camera, health, music, toys, video, baby, magazine, software, sport, IMDB, and MR. All data in the FDU-MTL dataset are raw text data, tokenized by the Stanford tokenizer. The detailed statistics of the FDU-MTL dataset are listed in the Appendix.

%%%%%%%%%%%%%%%%%%%%%%%%%%%%%%%%%%%%%%%%

\paragraph{Implementation Details}

All experiments are implemented by using \textbf{PyTorch}. The CAN has one hyperparameter: $\lambda$, which is fixed as 1 in all experiments, the parameter sensitivity analysis is presented in the Appendix. We use Adam optimizer \cite{kingma2014adam}, with the learning rate 0.0001, for training. The batch size is 8. We adopt the same model architecture as in \cite{chen2018multinomial}. For the Amazon Review dataset, MLPs are used as feature extractors, with an input size of 5,000. Each feature extractor is composed of two hidden layers, with size 1,000 and 500, respectively. The output size of the shared feature extractor is 128 while 64 for the domain-specific ones. The dropout rate is 0.4 for each component. Classifier and discriminator are MLPs with one hidden layer of the same size as their input ($128+64$ for classifier and $128+2$ for discriminator). ReLU is used as the activation function. For the FDU-MTL dataset, CNN with a single convolutional layer is used as the feature extractor. It uses different kernel sizes $(3,4,5)$, and the number of kernels is 200. The input of the convolutional layer is a 100-dimensional vector, obtained by using word2vec \cite{mikolov2013efficient}, for each word in the input sequence.

\subsection{Multi-Domain Text Classification}

%%%%%%%%%%%%%%%%%%%%%%%%%%%%%%%%%%%%%%%%
\begin{table*}[t]
\caption{\label{font-table} MDTC classification accuracies on the Amazon review dataset.}\smallskip
\label{table_ref1}
\centering
\resizebox{1.60\columnwidth}{!}{
\smallskip\begin{tabular}{ l|  c c c c c c c}
\hline
	Domain & CMSC-LS & CMSC-SVM & CMSC-Log & MAN-L2 & MAN-NLL & CAN(Proposed)\\
\hline
Books &  82.10 & 82.26 & 81.81 & 82.46 & 82.98 & $\mathbf{83.76\pm0.20}$ \\
DVD &  82.40 & 83.48 & 83.73 & 83.98 & 84.03 & $\mathbf{84.68\pm0.16}$ \\
Electr.  & 86.12 & 86.76 & 86.67 & 87.22 & 87.06 & $\mathbf{88.34\pm0.14}$ \\
Kit.  &  87.56 & 88.20 & 88.23 & 88.53 & 88.57 & $\mathbf{90.03\pm0.19}$\\
\hline
AVG  &  84.55 & 85.18 & 85.11 & 85.55 & 85.66 & $\mathbf{86.70\pm0.11}$\\
\hline
\end{tabular}}
\end{table*}
%%%%%%%%%%%%%%%%%%%%%%%%%%%%%%%%%%%%%%%%
\begin{table*}[t]
\caption{\label{font-table} MDTC classification accuracies on the FDU-MTL dataset. }\smallskip
\label{table_ref2}
\centering
\resizebox{1.60\columnwidth}{!}{
\begin{tabular}{ l| c c c c c c c}
\hline
	Domain & MT-CNN & MT-DNN & ASP-MTL & MAN-L2 & MAN-NLL & MT-BL & CAN(Proposed)\\
\hline
books & 84.5 & 82.2 & 84.0 & 87.6 & 86.8 & $\mathbf{89.0}$ & $87.8\pm0.2$ \\
electronics & 83.2 & 81.7 & 86.8 & 87.4 & 88.8 & 90.2 & $\mathbf{91.6\pm0.5}$ \\
dvd & 84.0 & 84.2 & 85.5 & 88.1 & 88.6 & 88.0 & $\mathbf{89.5\pm0.4}$ \\
kitchen & 83.2 & 80.7 & 86.2 & 89.8 & 89.9 & 90.5 & $\mathbf{90.8\pm0.3}$\\
apparel & 83.7 & 85.0 & 87.0 & $\mathbf{87.6}$ & $\mathbf{87.6}$ & 87.2 & $87.0\pm0.7$\\
camera & 86.0 & 86.2 & 89.2 & 91.4 & 90.7 & 89.5 & $\mathbf{93.5\pm0.1}$\\
health & 87.2 & 85.7 & 88.2 & 89.8 & 89.4 & $\mathbf{92.5}$ & $90.4\pm0.6$ \\
music & 83.7 & 84.7 & 82.5 & 85.9 & 85.5 & 86.0 & $\mathbf{86.9\pm0.1}$ \\
toys & 89.2 & 87.7 & 88.0 & 90.0 & 90.4 & $\mathbf{92.0}$ & $90.0\pm0.3$ \\
video & 81.5 & 85.0 & 84.5 & 89.5 & $\mathbf{89.6}$ & 88.0 & $88.8\pm0.4$ \\
baby & 87.7 & 88.0 & 88.2 & 90.0 & 90.2 & 88.7 & $\mathbf{92.0\pm0.2}$ \\
magazine & 87.7 & 89.5 & 92.2 & 92.5 & 92.9 & 92.5 & $\mathbf{94.5\pm0.5}$ \\ 
software & 86.5 & 85.7 & 87.2 & 90.4 & 90.9 & $\mathbf{91.7}$ & $90.9\pm0.2$ \\
sports & 84.0 & 83.2 & 85.7 & 89.0 & 89.0 & 89.5 & $\mathbf{91.2\pm0.7}$ \\
IMDb & 86.2 & 83.2 & 85.5 & 86.6 & 87.0 & 88.0 & $\mathbf{88.5\pm0.6}$\\
MR & 74.5 & 75.5 & 76.7 & 76.1 & 76.7 & 75.7 & $\mathbf{77.1\pm0.9}$\\
\hline
AVG & 84.5 & 84.3 & 86.1 & 88.2 & 88.4  & 88.6  & $\mathbf{89.4\pm0.1}$ \\
\hline
\end{tabular} }
\end{table*}
%%%%%%%%%%%%%%%%%%%%%%%%%

\paragraph{Comparison Methods}

We first conduct experiments of multi-domain text classification. The CAN model is compared with a number of state-of-the-art methods, which are listed below:

\begin{itemize}
    \item MT-CNN: A CNN-based model which shares the lookup table across domains for better word embeddings \cite{collobert2008unified}.
    \item MT-DNN: The multi-task deep neural network model with bag-of-words input and MLPs, in which a hidden layer is shared \cite{liu2015representation}.
    \item CMSC-LS, CMSC-SVM, CMSC-Log: The collaborative multi-domain sentiment classification method combines an overall classifier across domains and a set of domain-dependent classifiers to make the final prediction. The models are trained on least square loss, hinge loss, and log loss, respectively \cite{wu2015collaborative}.
    \item ASP-MTL: The adversarial multi-task learning framework of text classification, which adopts the share-private scheme, adversarial learning, and orthogonality constraints \cite{liu2017adversarial}.
    \item MAN-L2, MAN-NLL: The multinomial adversarial network for multi-domain text classification \cite{chen2018multinomial}. This model uses two forms of loss functions to train domain discriminator: least square loss and negative log-likelihood loss.
    \item MT-BL: The multi-task learning with bidirectional language models for text classification, which adds language modeling and a uniform label distribution-based loss constraint to the domain-specific feature extractors and shared feature extractor, respectively \cite{yang2019multi}.
\end{itemize}

All the comparison methods use the standard partitions of the datasets. Thus, we cite the results from \cite{chen2018multinomial,yang2019multi} for fair comparisons.
%%%%%

\paragraph{Results}

%%%%%%%%%%%%%%%%%
\begin{table}
\caption{\label{font-table} Ablation study analysis.}\smallskip
\label{table_ref4}
\centering
\resizebox{0.8\columnwidth}{!}{
\begin{tabular}{ l| c c c c c }
\hline
Method & Books & DVD & Electr. & Kit. & AVG\\
\hline
	CAN (full)& 83.76 & 84.68 & 88.34 & 90.03 & 86.70 \\
CAN w/o C & 82.45 & 84.45 & 87.30 & 89.65 & 85.96 \\
CAN w/o E & 83.60 & 84.80 & 87.70 & 89.40 & 86.38 \\
CAN w/o CE & 82.98 & 84.03 & 87.06 & 88.57 & 85.66 \\
\hline
\end{tabular}}
\end{table}

%%%%%%%%%%%%%%%%%
\begin{table*}
\caption{\label{font-table} Unsupervised multi-source domain adaptation results on the Amazon review dataset.}\smallskip
\label{table_ref3}
\centering
\resizebox{1.60\columnwidth}{!}{
\begin{tabular}{ l| c c c c c c c c}
\hline
Domain & MLP & mSDA & DANN & MDAN(H) & MDAN(S) & MAN-L2 & MAN-NLL & CAN(Proposed)\\
\hline
Books & 76.55 & 76.98 & 77.89 & 78.45 & 78.63 & 78.45 & 77.78 & $\mathbf{78.91}$\\
DVD & 75.88 & 78.61 & 78.86 & 77.97 & 80.65 & 81.57 & 82.74 & $\mathbf{83.37}$\\
Elec. & 84.60 & 81.98 & 84.91 & 84.83 & $\mathbf{85.34}$ & 83.37 & 83.75 & 84.76\\
Kit. & 85.45 & 84.26 & 86.39 & 85.80 & 86.26 & 85.57 & 86.41 & $\mathbf{86.75}$\\
\hline
AVG & 80.46 & 80.46 & 82.01 & 81.76 & 82.72 & 82.24 & 82.67 & $\mathbf{83.45}$\\
\hline
\end{tabular}}
\end{table*}

%%%%%%%%%%%%%%%%%%%%%%%%%%%%%%%%%%%%%%%%%%%%%%%%%%%%%%%%%

We conduct MDTC experiments following the setting of \cite{chen2018multinomial}: A 5-fold cross-validation is implemented for the Amazon review dataset. All data is divided into 5 folds per domain: three of the five folds are used as the training set, one is the validation set, and the remaining one is treated as the test set. The 5-fold average test accuracy is reported. All reports are based on 5 runs.

Table \ref{table_ref1} and Table \ref{table_ref2} show the experimental results on the Amazon review dataset and the FDU-MTL dataset, respectively. From Table \ref{table_ref1}, it can be seen that our model yields state-of-the-art results not only for the average classification accuracy, but also on each individual domain.

From the experimental results on the FDU-MTL dataset, reported in Table \ref{table_ref2}, we can see that the CAN model obtains the best accuracies on 10 of 16 domains and achieves the best result in terms of the average classification accuracy. The experimental results on these two MDTC benchmarks illustrate the efficacy of our model. 

%%%%%%%%%%%%%%%%%%%%

\paragraph{Ablation Study}

The CAN model adopts a conditional domain discriminator and entropy conditioning. In this section, we investigate how these two strategies impact the performance of our model on the Amazon review dataset. In particular, three ablation variants are evaluated: (1) CAN w/o C, the variant of the proposed CAN model without conditioning the domain discriminator on label predictions, which utilizes the standard domain discriminator and entropy conditioning; (2) CAN w/o E, the variant of the proposed CAN model without the entropy conditioning, which hence imposes equal importance to different instances; (3) CAN w/o CE, the variant of the proposed CAN model which only uses standard adversarial training for domain alignment. The results of the ablation study are shown in Table \ref{table_ref4}, where we can see that all variants produce inferior results, and the full model gives the best performance. Thus, it indicates that both strategies contribute to the CAN model. 

\subsection{Unsupervised Multi-Source Domain Adaptation}

In the MDTC scenario, the model requires labeled training data from each domain. However, in reality, many domains may have no labeled data at all. Therefore, it is important to evaluate the performance of unseen domains for MDTC models. 

In the unsupervised multi-source domain adaptation setting, we have multiple source domains with both labeled and unlabeled data and one target domain with only unlabeled data. The CAN has the ability to learn domain-invariant representations on unlabeled data, and thus it can be generalized to unseen domains. Since the target domain has no labeled data at all, the domain discriminator is updated only on unlabeled data in this setting. When conducting text classification on the target domain, we only feed the shared feature to $\mathcal{C}$ and set the domain-specific feature vector to 0.

We conduct the experiments on the Amazon review dataset. In the experiments, three of the four domains are regarded as the source domains, and the remaining one is used as the target one. The evaluations are conducted on the target domain. In order to validate CAN's effectiveness, we compare CAN with several domain-agnostic methods, including: (1) the MLP model; (2) the marginalized denoising autoencoder (mSDA) \cite{chen2012marginalized}; (3) the domain adversarial neural network (DANN) \cite{ganin2016domain}, these methods ignore the differences among domains. And certain state-of-the-art unsupervised multi-source domain adaptation methods: (4) the multi-source domain adaptation neural network (MDAN(H) and MDAN(S)) \cite{zhao2017multiple}; (5) the multinomial adversarial network (MAN-L2 and MAN-NLL) \cite{chen2018multinomial}. When training the domain-agnostic methods, the data in the multiple source domains are combined together as a single source domain.   

In Table \ref{table_ref3}, we observe that the CAN model outperforms all the comparison methods on three out of four domains. In terms of the average classification accuracy, the CAN method achieves superior performance. This suggests that our model has a good ability to generalize on unseen domains.

%%%%%%%%%%%%%%%%%%%%%%%
\section{Conclusion}
In this paper, we propose conditional adversarial networks (CANs) for multi-domain text classification. This approach can perform alignment on joint distributions of shared features and label predictions to improve the system performance. The CAN approach adopts the shared-private paradigm, trains domain discriminator by conditioning it on discriminative information conveyed by the label predictions to encourage the shared feature extractor to capture more discriminative information, and exploits entropy conditioning to guarantee the transferability of the learned features. Experimental results on two MDTC benchmarks demonstrate that the CAN model can not only improve the system performance on MDTC tasks effectively but also boost the generalization ability when tackling unseen domains.

%%%%%%%%%%%%%%%%%%%%%%%%%%%%%%%%%%%%%%%%%%%%%%%%%%%%%%%%%%%%%%%%%%%%%%%%%%%%%%%%

\bibliography{anthology,eacl2021}
\bibliographystyle{acl_natbib}

\appendix

\section{Appendix}
\label{sec:appendix}

\subsection{Proofs for CAN}

Assume there exist $M$ domains, for each domain $\mathcal{D}_i$, we have a joint distribution defined as:

\begin{equation}
    \begin{aligned}
    P_i(f,c) &\triangleq P(f=\mathcal{F}_s(\xvec), c=\mathcal{C}_i|\xvec\in\mathcal{D}_i)
    \end{aligned}
\end{equation}

\begin{table}
\caption{\label{font-table} Statistics of the FDU-MTL dataset}\smallskip
\label{table_ref8}
\centering
\resizebox{1.00\columnwidth}{!}{
\begin{tabular}{ l| c c c c c c c c c}
\hline
Domain & Train & Dev. & Test & Unlabeled & Avg. L & Vocab.\\
\hline
Books & 1400 & 200 & 400 & 2000 & 159 & 62K \\
Electronics & 1398 & 200 & 400 & 2000 & 101 & 30K \\
DVD & 1400 & 200 & 400 & 2000 & 173 & 69K \\
Kitchen & 1400 & 200 & 400 & 2000 & 89 & 28K \\
Apparel & 1400 & 200 & 400 & 2000 & 57 & 21K \\
Camera & 1397 & 200 & 400 & 2000 & 130 & 26K \\
Health & 1400 & 200 & 400 & 2000 & 81 & 26K \\
Music & 1400 & 200 & 400 & 2000 & 136 & 60K \\
Toys & 1400 & 200 & 400 & 2000 & 90 & 28K \\
Video & 1400 & 200 & 400 & 2000 & 156 & 57K \\
Baby & 1300 & 200 & 400 & 2000 & 104 & 26K \\
Magazine & 1370 & 200 & 400 & 2000 & 117 & 30K \\
Software & 1315 & 200 & 400 & 475 & 129 & 26K \\
Sports & 1400 & 200 & 400 & 2000 & 94 & 30K \\
IMDB & 1400 & 200 & 400 & 2000 & 269 & 44K \\
MR & 1400 & 200 & 400 & 2000 & 21 & 12K \\
\hline
\end{tabular}}
\end{table}

\noindent where $\mathcal{C}_i=\mathcal{C}([\mathcal{F}_s(\xvec),\mathcal{F}_d^i(\xvec)])$ is the  prediction probability of the given instance $\xvec$, $[\cdot,\cdot]$ is the concatenation of two vectors. The objective of $\mathcal{D}$ is to minimize $\mathcal{J}_{\mathcal{D}}$:

\begin{align}
    \mathcal{J}_\mathcal{D} = -\sum_{i=1}^M \mathbb{E}_{(f,c)\sim P_i}[\log\mathcal{D}_i([f,c])]
\end{align}

\noindent where $\mathcal{D}_i([f,c])$ is the probability of the vector $([f,c])$ coming from the $i$-th domain. Therefore, we have:

\begin{align}
\label{eq3}
    \sum_{i=1}^M \mathcal{D}_i([f,c]) = 1
\end{align}

\begin{lemma}
\label{lem1}
For any given $\mathcal{F}_s$, $\{\mathcal{F}_d^i\}_{i=1}^M$ and $\mathcal{C}$, the optimum conditional domain discriminator $\mathcal{D}^*$ is:
\begin{align}
    \mathcal{D}_i^*([f,c]) = \frac{P_i(f,c)}{\sum_{j=1}^M P_j(f,c)}
\end{align}
\end{lemma}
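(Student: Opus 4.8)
The plan is to reduce the functional minimization of $\mathcal{J}_{\mathcal{D}}$ over the tuple of discriminator outputs $(\mathcal{D}_1,\ldots,\mathcal{D}_M)$ to a pointwise constrained optimization, and then solve that with a single Lagrange multiplier. First I would rewrite each expectation as an integral against the density $P_i$, so that
\begin{align}
\mathcal{J}_{\mathcal{D}} = -\int \sum_{i=1}^M P_i(f,c)\,\log \mathcal{D}_i([f,c])\, d(f,c).
\end{align}
Since the constraint $\sum_{i=1}^M \mathcal{D}_i([f,c]) = 1$ holds separately at each point $(f,c)$ and the integrand couples the $\mathcal{D}_i$ only through their values at that same point, the minimization decouples and can be performed independently at each $(f,c)$.

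Second, at a fixed $(f,c)$ I would abbreviate $a_i = \mathcal{D}_i([f,c])$ and $p_i = P_i(f,c)$ and minimize $-\sum_{i=1}^M p_i \log a_i$ subject to $\sum_{i=1}^M a_i = 1$ and $a_i \ge 0$. Introducing a multiplier $\mu$ for the equality constraint and differentiating $-\sum_i p_i \log a_i + \mu\bigl(\sum_i a_i - 1\bigr)$ with respect to $a_i$ gives $-p_i/a_i + \mu = 0$, i.e. $a_i = p_i/\mu$. Summing over $i$ and imposing the constraint fixes $\mu = \sum_{j} p_j$, which yields $a_i = p_i / \sum_{j} p_j$, exactly the claimed form $\mathcal{D}_i^*([f,c]) = P_i(f,c)/\sum_{j=1}^M P_j(f,c)$.

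Third, I would confirm that this stationary point is the global minimizer rather than a saddle or maximum. The map $(a_1,\ldots,a_M)\mapsto -\sum_i p_i \log a_i$ is convex on the (convex) probability simplex because $-\log$ is convex and each $p_i\ge 0$, so the unique KKT point is the global minimum, and the inequality constraints $a_i\ge 0$ are inactive because the solution has all $a_i>0$ wherever some $p_i>0$.

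The step I expect to be the main obstacle is justifying the pointwise reduction together with its measure-theoretic edge cases: interchanging ``minimize'' and ``integrate'' requires the pointwise optimizer to be a measurable function of $(f,c)$, and one must treat the null set where $\sum_{j} P_j(f,c) = 0$, on which $\mathcal{D}_i$ is unconstrained by the objective and may be assigned arbitrarily on the simplex. Everything else is the standard variational computation familiar from the optimal-discriminator analysis of GANs, here generalized from two distributions to $M$.
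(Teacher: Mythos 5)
Your proposal is correct and follows essentially the same route as the paper: convert the expectations to integrals, reduce to a pointwise constrained optimization under $\sum_i \mathcal{D}_i([f,c])=1$, and solve with a Lagrange multiplier to obtain $\mathcal{D}_i^* = P_i/\sum_j P_j$. Your additional remarks on convexity (confirming the stationary point is the global minimizer) and on the measurability/null-set edge cases are refinements the paper omits, but the core argument is identical.
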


\begin{proof}
For any given $\mathcal{F}_s$, $\{\mathcal{F}_d^i\}_{i=1}^M$ and $\mathcal{C}$, the optimum

\begin{equation} \nonumber
    \begin{aligned}
       \mathcal{D}^* &= \mathop{\arg\min}_{\mathcal{D}} \mathcal{J}_{\mathcal{D}} \\
       &= \mathop{\arg\min}_{\mathcal{D}}-\sum_{i=1}^M\mathbb{E}_{(f,c)\sim P_i}[\log\mathcal{D}_i([f, c])] \\
       &=\mathop{\arg\max}_{\mathcal{D}}\sum_{i=1}^M \int_{(f,c)} P_i(f,c)\log\mathcal{D}_i([f,c])d(f,c)\\
       &=\mathop{\arg\max}_{\mathcal{D}}\int_{(f,c)}\sum_{i=1}^M P_i(f,c)\log\mathcal{D}_i([f,c])d(f,c)
    \end{aligned}
\end{equation}

\noindent Here, we utilize the Lagrangian Multiplier for $\mathcal{D}^*$ under the condition (\ref{eq3}). We have:

\begin{align}
    L(\mathcal{D}_1, ..., \mathcal{D}_M, \lambda) = \sum_{i=1}^M P_i \log\mathcal{D}_i - \lambda(\sum_{i=1}^M\mathcal{D}_i-1)
\end{align}

Let $\nabla L = 0$, we have:
\begin{equation}\nonumber
    \left\{
             \begin{array}{lr}
             \nabla_{\mathcal{D}_i}\sum_{j=1}^M P_j \log\mathcal{D}_j - \lambda\nabla_{\mathcal{D}_i}(\sum_{j=1}^M\mathcal{D}_j-1) = 0 &  \\
             \sum_{i=1}^M\mathcal{D}_i = 1 & \\
             \end{array}
\right.
\end{equation}
From the two above equations, we have:

\begin{align}
    \mathcal{D}_i^*(f,c) = \frac{P_i(f,c)}{\sum_{j=1}^M P_j(f,c)}
\end{align}
\end{proof}

\begin{theorem}
\label{theo1}
Let $\widetilde{P}(f,c) = \frac{\sum_{i=1}^M P_i(f,c)}{M}$, when $\mathcal{D}$ is trained to its optimum $\mathcal{D}^*$, we have:
\begin{align}
    \mathcal{J}_{\mathcal{D}^*}=M \log M-\sum_{i=1}^M KL(P_i(f,c)||\widetilde{P}(f,c))
\end{align}
where $KL(\cdot)$ is the Kullback-Leibler (KL) divergence of each joint distribution $P_i(f,c)$ to the centroid $\widetilde{P}(f,c)$
\end{theorem}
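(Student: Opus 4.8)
The plan is to prove the identity by direct substitution of the optimal discriminator from Lemma~\ref{lem1} into the objective $\mathcal{J}_{\mathcal{D}}$, mirroring the classical computation that recovers the Jensen--Shannon divergence in vanilla GANs, but generalized here to $M$ distributions. First I would rewrite the expectation $\mathbb{E}_{(f,c)\sim P_i}[\,\cdot\,]$ as an integral against the density $P_i(f,c)$, so that evaluating the objective at $\mathcal{D}^*$ becomes $\mathcal{J}_{\mathcal{D}^*} = -\sum_{i=1}^M \int P_i(f,c)\log\mathcal{D}_i^*([f,c])\,d(f,c)$, which is legitimate since the optimum is already characterized pointwise by Lemma~\ref{lem1}.

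The key step is to introduce the centroid $\widetilde{P}$ into the denominator of $\mathcal{D}_i^*$. Since $\widetilde{P}(f,c) = \frac{1}{M}\sum_{j=1}^M P_j(f,c)$, we have $\sum_{j=1}^M P_j(f,c) = M\,\widetilde{P}(f,c)$, so Lemma~\ref{lem1} gives $\mathcal{D}_i^*([f,c]) = P_i(f,c)/\bigl(M\,\widetilde{P}(f,c)\bigr)$. Splitting the logarithm as $\log\frac{P_i}{M\widetilde{P}} = \log\frac{P_i}{\widetilde{P}} - \log M$ then separates the integrand into two pieces: the first reproduces $-\sum_i KL(P_i\|\widetilde{P})$ by the very definition of the KL divergence, and the second collects a constant $+\log M$ weighted by each $P_i$.

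Finally I would use that each $P_i$ is a normalized density, $\int P_i(f,c)\,d(f,c)=1$, so the constant contributions sum to $\sum_{i=1}^M \log M = M\log M$. Combining the two pieces yields $\mathcal{J}_{\mathcal{D}^*} = M\log M - \sum_{i=1}^M KL(P_i\|\widetilde{P})$, which is exactly the claimed identity.

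The computation is essentially mechanical, so there is no deep obstacle; the one place to stay careful is the bookkeeping when pulling the factor $M$ out of the denominator and factoring $\log M$ out of the logarithm, since a misplaced $M$ would corrupt the additive constant and hence the stated optimum. I would also note in passing that Corollary~\ref{coro1} follows immediately afterward from the non-negativity of the KL divergence, which pins the maximum of $\mathcal{J}_{\mathcal{D}^*}$ to $M\log M$, attained exactly when every $P_i$ coincides with $\widetilde{P}$.
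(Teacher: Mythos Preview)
Your proposal is correct and follows essentially the same route as the paper: substitute the optimal discriminator from Lemma~\ref{lem1}, rewrite the denominator $\sum_j P_j = M\widetilde{P}$, split the logarithm to separate the $\log M$ constant from the $\log(P_i/\widetilde{P})$ term, and identify the latter as the KL divergence. The paper does the bookkeeping by adding and subtracting $\log M$ inside the expectation rather than immediately writing $\mathcal{D}_i^* = P_i/(M\widetilde{P})$, but the computation is the same.
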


\begin{proof}
Let $\widetilde{P}(f,c) = \frac{\sum_{i=1}^M P_i(f,c)}{M}$. We have:

\begin{equation} \nonumber
    \begin{aligned}
    \sum_{i=1}^M KL(P_i(f,c)||&\widetilde{P}(f,c)) = \\ &\sum_{i=1}^M\mathbb{E}_{(f,c)\sim P_i}[\log\frac{P_i(f,c)}{\widetilde{P}(f,c)}]
    \end{aligned}
\end{equation}

When $\mathcal{D}$ is updated to $\mathcal{D}^*$, we have:

\begin{equation} \nonumber
    \begin{aligned}
    \mathcal{J}_{\mathcal{D}^*} &= -\sum_{i=1}^M \mathbb{E}_{(f,c)\sim P_i}[\log\mathcal{D}^*_i([f,c])] \\
    &= -\sum_{i=1}^M \mathbb{E}_{(f,c)\sim P_i}[\log\frac{P_i(f,c)}{\sum_{j=1}^M P_j(f,c)}] \\
    &= -\sum_{i=1}^M \mathbb{E}_{(f,c)\sim P_i}[\log\frac{P_i(f,c)}{\sum_{j=1}^M P_j(f,c)} + \log M] \\
    &+ M \log M \\
    &= M \log M - \sum_{i=1}^M \mathbb{E}_{(f,c)\sim P_i}[\log\frac{P_i(f,c)}{\frac{\sum_{j=1}^M P_j(f,c)}{M}}] \\
    &= M \log M - \sum_{i=1}^M\mathbb{E}_{(f,c)\sim P_i}[\log\frac{P_i(f,c)}{\widetilde{P}(f,c)}] \\
    &= M \log M - \sum_{i=1}^M KL(P_i(f,c)||\widetilde{P}(f,c)) \\
    \end{aligned}
\end{equation}
\end{proof}

\noindent In our model, a mini-max game is implemented to achieve the optimum:

\begin{align}
    \min_{\mathcal{F}_s,\{\mathcal{F}_d^i\}_{i=1}^M, C}\max_{\mathcal{D}} \quad \mathcal{J}_{\mathcal{C}} + \lambda \mathcal{J}_{\mathcal{D}}
\end{align}

\noindent Therefore, by the non-negativity and convexity of the KL-divergence, we can have the corollary: \\

\begin{corollary}
\label{coro1}
When $\mathcal{D}$ is trained to its optimum $\mathcal{D}^*$, $\mathcal{J}_{\mathcal{D}^*}$ is $M \log M$. The optimum can be obtained if and only if $P_1(f,c) = P_2(f,c) = ... = P_M(f,c) = \widetilde{P}(f,c)$.
\end{corollary}

\begin{figure}
    \centering
    \includegraphics[width=0.8\columnwidth]{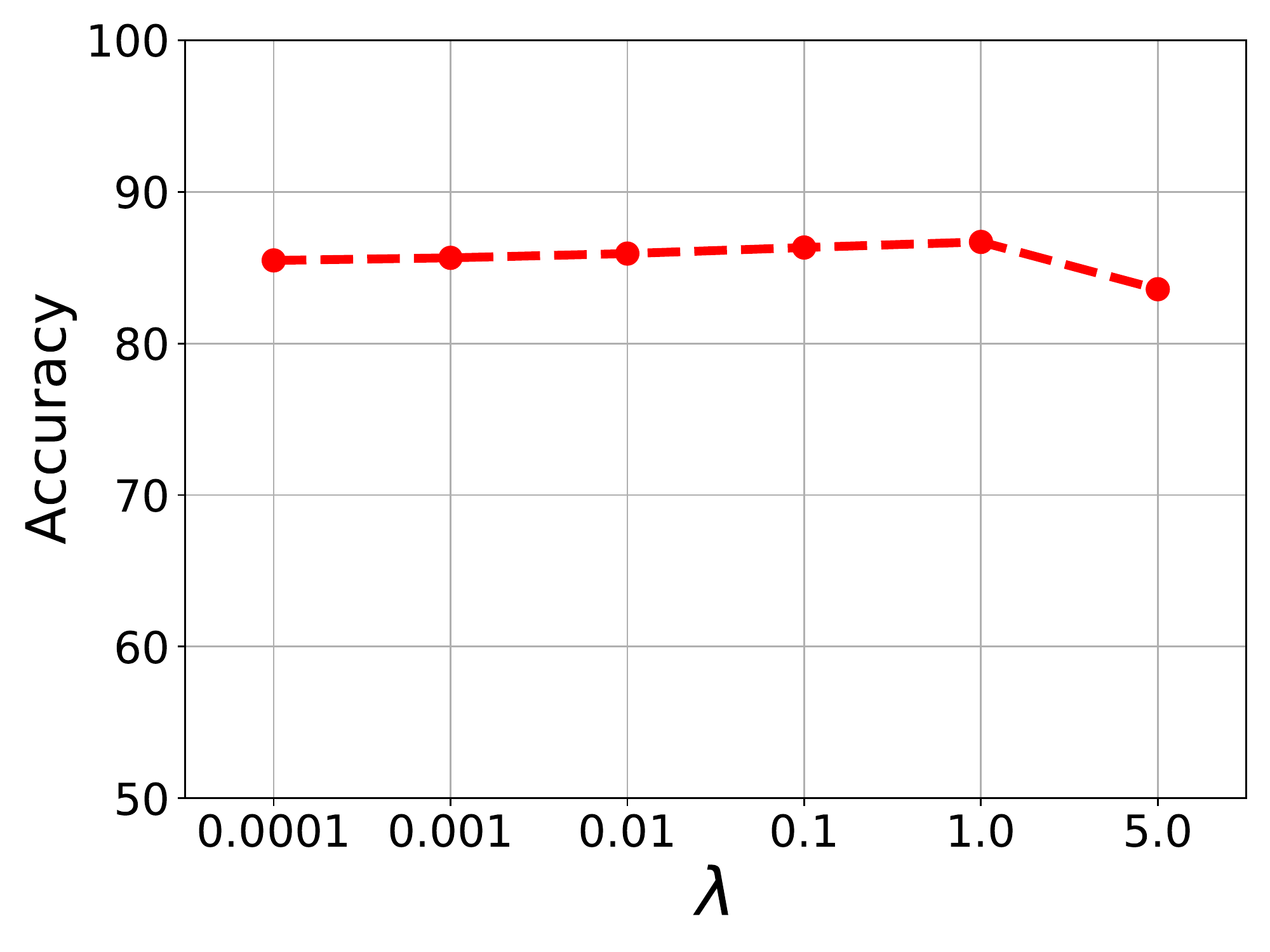}
    \caption{The parameter sensitivity analysis.
	}
    \label{Fig10}
\end{figure}

\subsection{Parameter Sensitivity Analysis}

The proposed method has one hyperparameter $\lambda$, which is used to balance $\mathcal{J}_\mathcal{C}$ and $\mathcal{J}_\mathcal{D}^E$. We conduct parameter sensitivity analysis on the Amazon review dataset. The $\lambda$ is evaluated in the range $\{0.0001,0.001,0.01,0.1,1.0,5.0\}$. The experimental results are shown in Figure \ref{Fig10}. The average classification accuracies across the four domains are reported. It can be noted that from 0.0001 to 1.0, the performance increases with $\lambda$ increasing, the performance change is very small. Then the accuracy reaches the optimum at the point $\lambda=1.0$, while the further increase of $\lambda$ will dramatically deteriorate the performance. This suggests that the selection of $\lambda$ has an influence on the system performance.

\end{document}